\theoremstyle{plain}
\theoremstyle{definition}
\def\removesen{1}
\newcommand\autost[1]{\textcolor{red!40!white}{\sout{#1}}}
\newcommand\autost[1]{}
\definecolor{cadmiumgreen}{rgb}{0.0, 0.42, 0.24}
\newmdenv[linecolor=black,backgroundcolor=gray!20]{myframe}
\newtcolorbox{mytcolorbox}[2][width=0.975\columnwidth,halign=flush center,arc is angular]{colback=gray!10,colframe=black,fonttitle=\bfseries,coltitle=black,colbacktitle=white,enhanced,attach boxed title to top center={yshift=-2mm},title={#2},#1}
\begin{document}

\title{DETERRENT: \underline{Dete}cting T\underline{r}ojans using \underline{Re}inforceme\underline{nt} Learning}

\author{Vasudev~Gohil, Satwik~Patnaik, Hao~Guo, Dileep~Kalathil, and Jeyavijayan~(JV)~Rajendran}
\affiliation{\institution{Electrical \& Computer Engineering, Texas A\&M University, College Station, Texas, USA}}
\email{{gohil.vasudev, satwik.patnaik, guohao2019, dileep.kalathil, jv.rajendran} @tamu.edu}

\renewcommand{\shortauthors}{Gohil et al.}

\begin{abstract}
Insertion of hardware Trojans (HTs) in integrated circuits is a pernicious threat. 
Since HTs are activated under rare trigger conditions, detecting them using random logic simulations is infeasible.
In this work, we design a reinforcement learning (RL) agent that circumvents the exponential search space and returns a minimal set of patterns that is most likely to detect HTs. 
Experimental results on a variety of benchmarks demonstrate the efficacy and scalability of our RL agent, which obtains a significant reduction ($169\times$) in the number of test patterns required while maintaining or improving coverage ($95.75\%$) compared to the state-of-the-art techniques.
\end{abstract}

\keywords{Reinforcement Learning, Hardware Trojans}

\maketitle

\renewcommand{\headrulewidth}{0.0pt}
\thispagestyle{fancy}
\lhead{}
\rhead{}
\chead{\copyright~2022 IEEE.
This is the author's version of the work.
The definitive Version of Record is published in
2022 Design Automation Conference (DAC), DOI 10.1145/3489517.3530518}
\cfoot{}

\section{Introduction}
\label{sec:Introduction}

Reinforcement learning (RL) helps a computing system (a.k.a. agent) to learn by its own experience through exploring and exploiting the underlying environment.
Over time, the agent takes optimal actions in sequence, even with limited or no knowledge regarding the environment.
From a cybersecurity perspective, such RL agents are attractive as they can generate optimal defense techniques in an unknown adversarial environment.
Given the latest improvements in RL algorithms, these agents can efficiently navigate high-dimensional search space to find optimal actions.
Hence, researchers have used RL agents to develop promising approaches for several security problems, including intrusion detection~\cite{RL_intrusion_detection}, fuzzing~\cite{RL_Fuzzing,RL_Fuzzing_USENIX}, and developing secure cyber-physical systems~\cite{RL_CPS1,RL_CPS2,nguyen2019deep_new}.
However, research in hardware security is still in its infancy to reap the power of RL in developing optimal defenses in adversarial environments.
In this work, we showcase how RL can be used to efficiently detect hardware Trojans (HTs).
Out of the many problems in hardware security, the HT detection problem presents significant computational challenges to the defender in detecting them in an unknown environment (i.e., HT-infected design).

The increasing cost of integrated circuit (IC) manufacturing has forced semiconductor companies to send their designs to untrusted, off-shore foundries.
Malicious components known as HTs inserted during the fabrication stage can leak secret information, degrade performance, or cause a denial of service.

\subsection{Hardware Trojans}
\label{sec:HTs}

An HT consists of two components: \textit{trigger} and \textit{payload}. 
When the trigger is activated, the payload causes a malicious effect in the design. 
Figure~\ref{fig:Trojan} illustrates an HT that flips an output upon trigger activation. 
The trigger comprises multiple nets, called \textit{select nets}, in the design.
For instance, the adversary can choose the select nets so that the trigger gets activated only under extremely rare conditions.
This is achieved by determining a \textit{rareness threshold}\footnote{Rareness threshold is the probability below which nets are classified as rare nets.} and constructing the trigger using the corresponding \textit{rare nets}.

\begin{figure}[tb]
\includegraphics[width=0.475\textwidth,trim={0 0.8cm 0 0.6cm},clip]{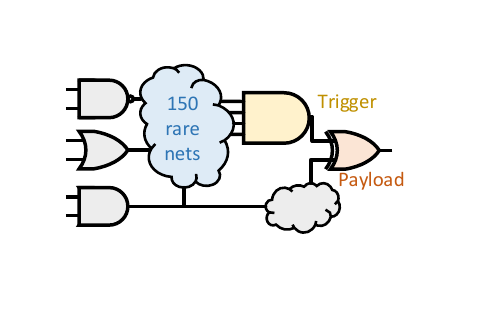}
\caption{Example of an HT in a design with $150$ rare nets.}
\label{fig:Trojan}
\end{figure}

Detecting HTs is difficult since they are designed to be stealthy~\cite{xiao2016Trojan_survey}.
Consider the example in Figure~\ref{fig:Trojan} with $150$ rare nets.
Four of them are used for the trigger.
Thus, the defender needs to check up to ${}^{150}C_{4} \approx 20\times10^6$ different combinations of rare nets, which is extremely challenging. Such a large space makes it difficult even for conventional automatic test pattern generation (ATPG) tools~\cite{TestMAX} to activate the trigger.

\subsection{Hardware Trojan Detection Techniques}
\label{sec:detecting_HTs}

One can classify the HT detection techniques under two broad categories: logic testing and side-channel analysis.
Logic testing involves the application of test patterns to the HT-infected design to activate the trigger~\cite{chakraborty2009mero,TARMAC_TCAD,pan2021automated}. 
However, activating an extremely rare trigger is challenging because the possible combinations of rare nets are extensive. 
On the other hand, side-channel-based detection techniques detect HTs based on the differences in the side-channel measurements (such as power or timing) between the golden (i.e., HT-free) design and an HT-infected design~\cite{narasimhan2012hardware,huang2016mers,huang2018scalable,lyu2019efficient}. 
However, since HTs have an extremely small footprint compared to the overall size of the design, their impact on side-channel metrics is usually negligible and concealed under process variation and environmental effects~\cite{rai2009performance}. 
We refer interested readers to~\cite{xiao2016Trojan_survey} for a detailed survey on HTs and HT detection techniques.

Note that activating the trigger is not only essential for logic testing techniques but also helpful for side-channel-based techniques because activating the trigger leads to an increase in the side-channel footprint of the HT, making it easier to detect~\cite{TARMAC_TCAD}.

Although activating the trigger is critical, it is difficult to do so efficiently. 
Consider Figure~\ref{fig:Trojan}; the defender needs up to $20\times 10^6$ test patterns to guarantee trigger activation because the defender does not know which rare nets make the trigger.
Next, we outline the ideal characteristics required from any technique for activating the trigger. \textbf{(1) High trigger activation rate}: The technique should activate a large number of trigger conditions to detect HTs successfully.\footnote{Trigger activation rate, i.e., the proportion of trigger conditions activated by a set of test patterns, is also called trigger coverage.} 
\textbf{(2) Small test generation time}: The time required to generate the test patterns should not be large; otherwise, the technique will not be scalable to larger designs. 
\textbf{(3) Compact set of test patterns}: The number of test patterns required to activate the trigger conditions should be small. 
A large number of test patterns affect the testing cost adversely. 
\textbf{(4) Feedback-guided approach}: The technique should analyze the test patterns and their impact on the circuit to generate new test patterns, thereby reducing the test generation time and the size of the test set.

\subsection{Prior Works and Their Limitations}
\label{sec:prior_work}

\noindent\textbf{MERO} generates test patterns that activate each rare net $N$ times~\cite{chakraborty2009mero}. 
The hypothesis is that if all the rare nets are activated $N$ times, the test patterns are likely to activate the trigger. 
The algorithm starts with a large pool of random test patterns and iteratively performs circuit simulation to keep track of the number of rare nets that get activated.
While MERO provides moderate performance for small benchmarks, it fails for large benchmarks. For instance, the trigger coverage of MERO for the \texttt{MIPS} processor is only 0.2\%~\cite{TARMAC_TCAD}, as it violates the characteristics (1), (2), (3), and (4) mentioned above.

\noindent\textbf{TARMAC} overcomes the limitations of MERO by transforming the problem of test pattern generation into a clique cover problem~\cite{TARMAC_TCAD}. It iteratively finds maximal cliques of rare nets that satisfy their rare values. 
By not relying on brute force, TARMAC outperforms MERO by a factor of 71$\times$ on average.
However, the performance of TARMAC is sensitive to randomness since the algorithm relies on randomly sampled cliques.
Although the test generation time for TARMAC is short, it violates characteristics (3) and (4).

\noindent\textbf{TGRL} uses RL along with a combination of rareness and testability measures to overcome the limitations of TARMAC~\cite{pan2021automated}. 
TGRL achieves better coverage than TARMAC and MERO while reducing the run-time. 
However, it still violates characteristic (3), as evidenced by our results in Section~\ref{sec:results}.

\subsection{Our Contributions}
\label{sec:research_contributions}

As discussed above, all existing techniques for trigger activation fall short on one or more fronts. 
In this work, we propose a new technique that is designed to satisfy all four ideal characteristics.
We model the test generation problem for HT detection as an RL problem because test generation involves searching a large space to find an optimal set of test patterns.
This is exactly what RL algorithms do: they navigate large search spaces to find optimal solutions.
However, there are several challenges that need to be overcome to realize a practical and scalable RL agent, such as (i)~large amount of training time required for large designs, (ii)~the agent needs to be efficient while choosing actions, and (iii)~some challenging benchmarks require smart fine-tuning.
We provide further details on how we overcome these challenges in Section~\ref{sec:proposed_framework}. 
The primary contributions of our work are as follows.

\begin{itemize}[leftmargin=*]

\item We develop an RL technique that is efficient in activating rare trigger conditions, thereby addressing the limitations of the state-of-the-art HT detection techniques.

\item We overcome several challenges to make our technique scalable to a large design like the \texttt{MIPS} processor.

\item We perform an extensive evaluation on diverse benchmarks and demonstrate the capability of our technique, which outperforms the state-of-the-art logic-testing techniques on all benchmarks. 

\item Our technique provides two orders of magnitude ($169\times$) reduction in the size of the test set compared to existing techniques.

\item Our technique maintains similar trigger coverage ($\leq 2\%$ drop) with increasing number of rare nets, whereas the state-of-the-art technique's performance drops to $0\%$.

\item Our technique maintains similar trigger coverage ($\leq 2\%$ drop) for at least $64\times$ more potential trigger conditions.
    
\item We release our benchmarks and test patterns~\cite{DETERRENT-git}.

\end{itemize}
\section{Assumptions and Background}
\label{sec:background}

\subsection{Threat Model}
\label{sec:threat_model}

We assume the standard threat model used in logic testing-based HT detection~\cite{chakraborty2009mero,TARMAC_TCAD,pan2021automated}. 
We assume that the adversary inserts HTs in rare nets of the design to remain stealthy. The defender's (i.e., our) objective is to generate a minimal set of test patterns that activate unknown trigger conditions. We generate test patterns using only the golden (i.e., HT-free) netlist. 

\subsection{Reinforcement Learning}
\label{sec:RL_background}

RL is a machine learning methodology where an intelligent agent learns to navigate an environment to maximize a cumulative reward.
It is formalized as a Markov decision process. 
An RL agent interacts with the environment in discrete time steps. 
At each step, the agent receives the current state and the reward, and it chooses the action which is sent to the environment. 
The environment moves the agent to a new state and provides a reward corresponding to the state transition and action. 
The aim of the RL agent is to learn a policy $\pi$ that maximizes the expected cumulative reward. 
The policy maps state-action pairs to probabilities of taking that action in a given state. 
The agent learns the optimal or near-optimal policy in a trial-and-error method by interacting with the environment.
\section{DETERRENT: Detecting Trojans using Reinforcement Learning}
\label{sec:proposed_framework}

We now formulate the trigger activation problem as an RL problem, but it suffers from challenges related to scalability, efficiency, and poor performance. We then address these challenges and devise a final RL agent that outperforms all existing techniques.
 
\subsection{A Simple Formulation}
\label{sec:initial_formulation}

As shown in Figure~\ref{fig:Trojan}, to activate the trigger, the defender has to apply an input pattern that forces all four rare nets to take their rare values simultaneously,\footnote{For the sake of conciseness, henceforth, we shall use the phrase ``activate the rare nets'' instead of ``force the rare nets to take their rare values.''} but the defender does not know which four rare nets constitute the trigger.
A na\"ive solution is to generate one input pattern for each combination of four rare nets. 
Such an approach would require up to ${^r}C_{4}$ test patterns ($r$ is the total number of rare nets), which would be infeasible to employ in practice. 
However, one input pattern can activate multiple different combinations of rare nets simultaneously. 
So, we need to find a minimal set of input patterns that can collectively activate all combinations of rare nets. This problem is a variant of the set-cover problem, which is NP-complete~\cite{cormen2009introduction}. 
We call a set of rare nets \textit{compatible} if there exists an input pattern that can activate all the rare nets in the set simultaneously. 
Thus, our objective is to develop an RL agent that generates maximal sets of compatible rare nets.

We now map the trigger activation problem into an RL problem by formulating it as a Markov decision process.

\begin{itemize}[leftmargin=*]

\item \textbf{States $\mathcal{S}$} is the set of all subsets of the rare nets. An individual state $s_t$ represents the set of compatible rare nets at time $t$.

\item \textbf{Actions $\mathcal{A}$} is the set of all rare nets. An individual action $a_t$ is the rare net chosen by the agent at time $t$.

\item \textbf{State transition $P(s_{t+1}|a_t,s_t)$} is the probability that action $a_t$ in state $s_t$ leads to the state $s_{t+1}$. 
In our case, if the chosen rare net (i.e., the action) is compatible with the current set of rare nets (i.e., the current state), we add the chosen rare net to the set of compatible rare nets (i.e., the next state). 
Otherwise, next state remains the same as the current state. 
Thus, in our case, the state transition is deterministic, as shown below.
    \[
    s_{t+1}=
    \begin{dcases}
        \{a_t\} \cup s_t, & \text{if } a_t\text{ is compatible with }s_t\\
        s_t,              & \text{otherwise}
    \end{dcases}
    \]

\item \textbf{Reward function $R(s_t,a_t)$} is equal to the square of the size of the next state for compatible states, and 0 otherwise.
    \[
    R(s_{t},a_t)=
    \begin{dcases}
        |\{a_t\} \cup s_t|^2 = |s_{t+1}|^2, & \text{if } a_t\text{ is compatible with }s_t\\
        0,              & \text{otherwise}
    \end{dcases}
    \]
The reward is designed so that the agent tries to maximize the size of the state, i.e., the number of compatible rare nets. 
We square the reward at each step, but any power greater than 1 would be appropriate since we want the reward function to be convex to account for the fact that as the size of the state grows, the difficulty of finding a new compatible rare net increases.
    
\item \textbf{Discount factor $\gamma$} $(0 \leq \gamma \leq 1)$ indicates the importance of future rewards relative to the current reward.
\end{itemize}

The initial state $s_0$ is a singleton set containing a randomly chosen rare net.
At each step $t$, the agent in state $s_t$ chooses an action $a_t$, arrives in the next state $s_{t+1}$ according to the state transition rules, and receives a reward $r_{t}$. 
This cycle of state, action, reward, and next state is repeated $T$ times, and this constitutes one \textit{episode}.
At the end of each episode, the state of the agent reflects the rare nets that are compatible.\footnote{For software implementation, we represent the states (which are defined as sets) as binary vectors, with each element on the vector indicating whether the corresponding rare net is present in the state or not.} Since the state and action spaces are discrete, we train our agent using the Proximal Policy Optimization (PPO) algorithm with default parameters unless specified otherwise~\cite{schulman2017proximal}.

Once the agent returns the maximal sets of compatible rare nets after training, we pick the $k$ largest distinct sets and generate the test patterns corresponding to those sets using a Boolean satisfiability (SAT) solver. $k$ is a hyperparameter of our technique. 

Our experiments indicate that this simple agent performs well on small benchmarks. 
But, for larger benchmarks like the \texttt{MIPS} processor from OpenCores~\cite{OpenCores_MIPS} we obtain low trigger coverage ($\approx$70\% after training for 12 hours). 
We analyzed the basic architecture in detail, and it faces certain challenges which are presented next.

\begin{figure}[htb]
\centering
\includegraphics[width=0.475\textwidth,trim={0.2cm 0.4cm 0 0},clip]{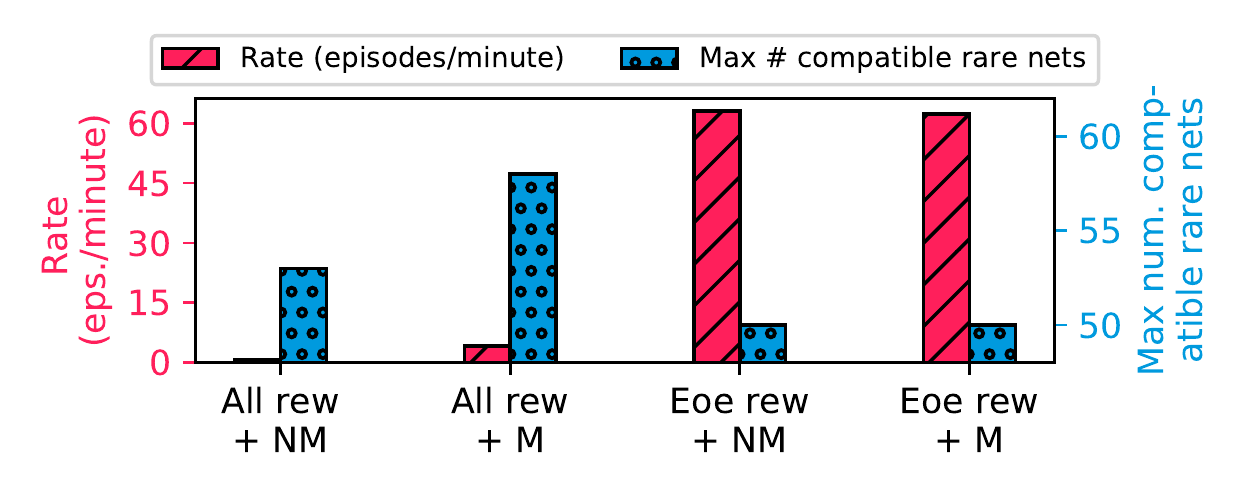}
\caption{Combinations of reward and masking methods for \texttt{MIPS}. Eoe: End-of-episode, M: Masking, NM: No masking
}
\label{fig:challenge_sol2}
\end{figure}

\subsection{End-of-Episode Reward Computation}

\noindent\textbf{Challenge 1: Large training time.} The basic architecture requires computing the reward for each time step, which involves checking if the selected action is compatible with the current state or not. 
For a large benchmark like the \texttt{MIPS} processor, the check takes a few seconds (because of the large number of gates in the benchmark) each time, and the agent requires millions of steps to learn. Hence, the training time becomes prohibitively large.

\noindent\textbf{Solution 1.} To address challenge 1, we reduce the frequency of reward computation by computing it only at the end of the episode. 
At all intermediate steps, the reward is set to 0.
While this approach speeds up the training by a factor of $\approx86\times$, the rewards become sparse, and it affects the performance of our agent. However, the impact on performance is only $5.6\%$, as shown in Table~\ref{tab:challenge_sol1}. 

\begin{table}[htb]
\caption{Comparison of training rates for the reward methods for the \texttt{MIPS} benchmark: all steps vs. end-of-episode.}
\label{tab:challenge_sol1}
\resizebox{0.45\textwidth}{!}{%
\begin{tabular}{cccc} \toprule
\multirow{2}{*}{Method} & \multirow{2}{*}{\begin{tabular}[c]{@{}c@{}}Max. \# compatible \\ rare nets\end{tabular}} & \multicolumn{2}{c}{Rate}  \\ \cmidrule(lr){3-4}
& & (steps/min) & (eps./min) \\ \midrule
Reward at all steps & 53 & 108 & 0.72\\ 
End-of-episode reward & 50 & 9387 & 63\\ 
Improvement & -5.6\% & 86.91$\times$ & 87.5$\times$\\ \bottomrule
\end{tabular}
}
\end{table}

\subsection{Masking Actions for Efficiency} \noindent\textbf{Challenge 2: Wasted efforts in choosing actions.} Another challenge that the basic architecture suffers from is inefficiency in choosing actions. At each step, the actions available to the agent remain the same, irrespective of the state of the agent. This leads to situations where the agent chooses an action that has already been chosen in the past, or
that is known to be not compatible with at least one of the rare nets in the current state. 
Hence, the time spent by the agent on such steps is wasted. 

\begin{figure}[tb]
\centering
\includegraphics[width=0.475\textwidth,trim={0 0.35cm 0 0.35cm},clip]{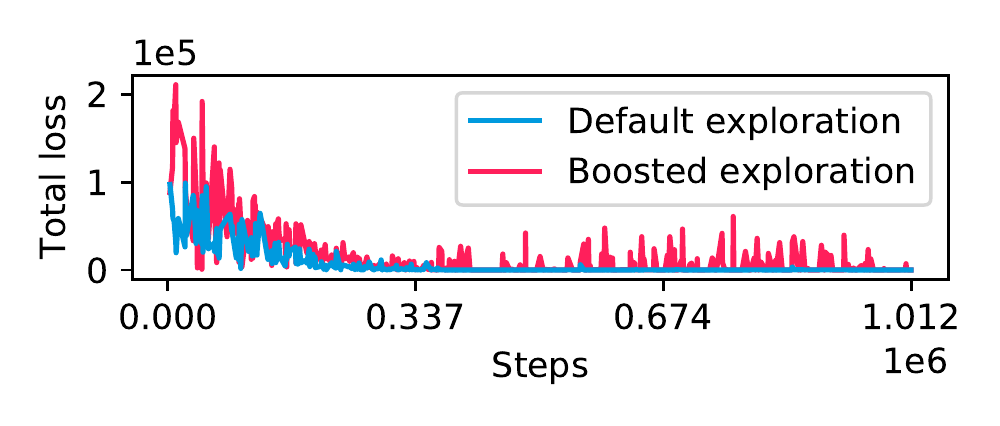}
\caption{Total loss trends in  \texttt{c2670}:  default vs. boosted exploration.}
\label{fig:challenge_sol3}
\end{figure}

\begin{figure*}[tb]
\centering
\includegraphics[width=\textwidth,trim={0.75cm 0.73cm 0.6cm 0},clip]{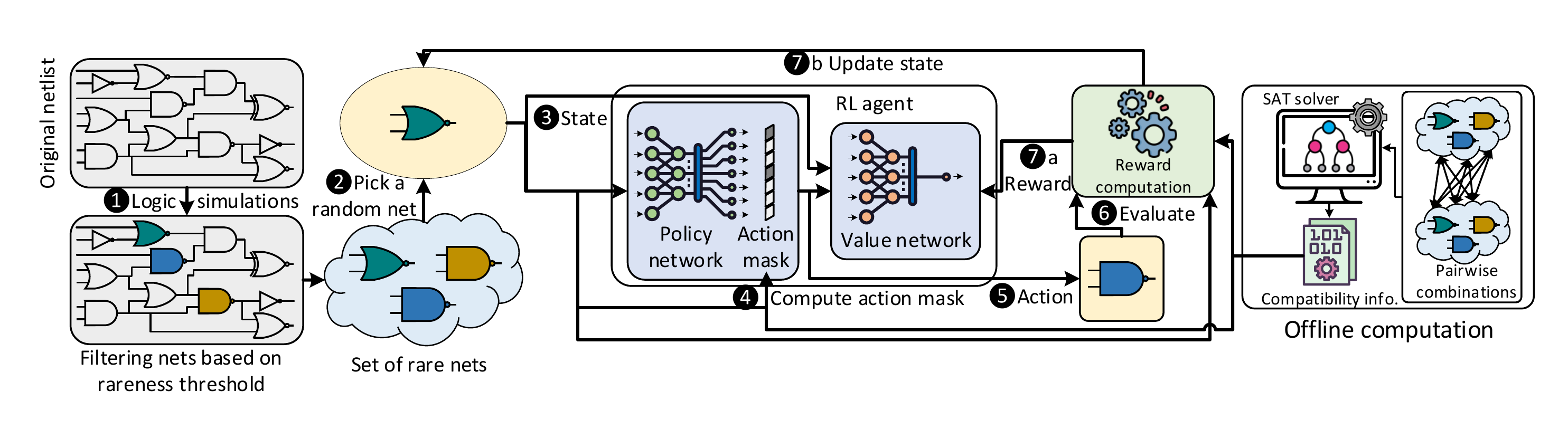}
\caption{Architecture of DETERRENT.}
\label{fig:RL_pipeline}
\end{figure*}

\noindent\textbf{Solution 2.} To increase the efficiency of the agent in choosing actions, we mask the actions available to the agent based on the state at any given time step. This ensures that at each time step, the agent only chooses actions that lead it to a new state.
Additionally, reward computation also becomes less sparse because episode lengths reduce due to masking (episode ends when there are no available actions). Since we are eliminating actions from the state space, one may wonder if this approach may eliminate optimal actions. We now prove that this is not possible for our problem formulation. 

\begin{theorem}
Masking actions does not prevent our agent from learning anything that it could have learned otherwise.
\end{theorem}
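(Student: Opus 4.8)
The plan is to show that every action removed by masking is one that could not contribute to the agent's objective, so eliminating such actions cannot remove any optimal behavior. First I would characterize the masked set exactly. At a state $s_t$, masking discards (i)~rare nets already contained in $s_t$ and (ii)~rare nets incompatible with $s_t$. In case~(i) the transition rule gives $s_{t+1} = \{a_t\} \cup s_t = s_t$, and in case~(ii) it gives $s_{t+1} = s_t$ by definition; conversely, any surviving action is both new and compatible, so $s_{t+1} = \{a_t\} \cup s_t$ with $|s_{t+1}| > |s_t|$. Hence the masked actions are precisely the \emph{self-loop} actions, those for which $s_{t+1} = s_t$.

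Next I would observe that self-loops are unproductive for the objective. The agent is rewarded for enlarging the set of compatible rare nets: the end-of-episode reward is $|s_T|^2$, and the per-step reward is strictly increasing in the state size. A self-loop leaves $|s_t|$ unchanged, so only unmasked actions can ever increase it. An optimal policy therefore gains nothing from a self-loop except the consumption of one of its $T$ steps.

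The core of the argument is a trajectory-correspondence step. Given any trajectory produced in the unmasked MDP, I would delete every self-loop step; what remains is a legal trajectory in the masked MDP that visits the same sequence of distinct states and terminates at the same set $s_T$, using no more than $T$ steps. Consequently every compatible set reachable under the unmasked dynamics is reachable under masking, and with the steps freed by the deletions the masked agent can only do at least as well, possibly reaching a strict superset. Since the end-of-episode reward depends only on the size of the terminal set, the maximum achievable reward, the optimal value, and the collection of attainable maximal compatible sets are all preserved. Masking thus cannot prevent the agent from learning anything it could have learned without it.

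I expect the main obstacle to be the subtlety of case~(i): a repeated action $a_t \in s_t$ is technically \emph{compatible}, so in the original per-step formulation it earns the positive reward $|s_t|^2$ rather than $0$, which might appear to make repeats worth taking. I would dispel this by appealing to the end-of-episode reward of Solution~1, under which intermediate repeats contribute nothing to the return, and by noting that whenever a growth action exists it strictly dominates a repeat (the reward is strictly increasing in $|s_t|$), while when no growth action exists the state is already maximal and the masking convention merely ends the episode at that same terminal set. In every case the correspondence above preserves optimality, which completes the argument.
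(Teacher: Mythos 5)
Your proposal is correct and rests on the same core observation as the paper's proof: the actions removed by masking are exactly those for which $s_{t+1}=s_t$ (already-present or incompatible nets), while every state-advancing action remains available to the masked agent. The paper stops there, phrasing it as a one-step simulation between an unmasked agent $\mathcal{P}$ and a masked agent $\mathcal{P}'$: whatever useful action $\mathcal{P}$ takes, $\mathcal{P}'$ can take too, and everything else is a self-loop. You go two steps further. First, you lift the one-step argument to a full trajectory correspondence (delete the self-loops), which makes explicit that every reachable compatible set, and hence every terminal reward, is preserved; the paper leaves this iteration implicit. Second, you flag and resolve a subtlety the paper silently skips: under the per-step reward, a repeated action $a_t\in s_t$ is formally \emph{compatible} and therefore earns $|s_t|^2>0$, so self-loops are not reward-free in that formulation. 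Your resolution --- growth actions strictly dominate repeats whenever they exist, and when none exist the state is already maximal so the terminal set (which is what the agent is meant to learn) is unaffected --- is the right one, and it is the piece a careful reader would want added to the paper's terse argument. Both proofs establish the theorem as stated; yours is the more defensible version of the same idea.
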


\begin{proof}
Let $\mathcal{P}'$ and $\mathcal{P}$ denote an agent that masks and does not mask actions, respectively. Suppose both $\mathcal{P}$ and $\mathcal{P}'$ are in state $s$. Let $\mathcal{A}$ denote the complete set of actions, and $\mathcal{A}_s$ denote the set of masked actions for state $s$. So, $\mathcal{A}_s = \{i|i \text{ is compatible with }s \text{ and } i \notin s\}$ and $\mathcal{A}_s \subseteq \mathcal{A}$.
If $\mathcal{P}$ chooses an action $a \in \mathcal{A} \setminus \mathcal{A}_s$ (i.e., an action in the set difference), then $\mathcal{P}$ will stay in the same state because the rare net corresponding to such an action $a$ would either be incompatible with $s$ or it would already be in $s$. On the other hand, for any action $a' \in \mathcal{A}_s$ chosen by $\mathcal{P}$, agent $\mathcal{P}'$ can also choose the same action $a'$ since it is in $\mathcal{A}_s$. Hence, masking does not prevent our agent from learning anything that the corresponding unmasked agent could have learned.
\end{proof}

To enable masking, we compute pairwise compatibility of all rare nets using a SAT solver before training.
Since the compatibility computation for each unique pair is independent, we parallelize it across 64 processes to reduce the runtime. 
During training, for a given state $s$ (i.e., set of compatible rare nets at the current step), all actions (i.e., rare nets) that are not compatible with any of the rare nets in $s$ are masked off, and hence, are not chosen.

To design the best architecture, we implemented agents with all combinations of reward methods (at all steps and end-of-episode) and masking (with and without). The results in Figure~\ref{fig:challenge_sol2} demonstrate that to obtain the maximum number of compatible rare nets, the optimal architecture should mask actions based on state and provide rewards at each time step.

\subsection{Boosting Exploration}

\noindent\textbf{Challenge 3: Convergence to local optima.} Since the agent's objective is to generate maximal sets of rare nets, for certain benchmarks (for instance, \texttt{c2670}), the agent gets stuck in local optima. 
In other words, the agent quickly learns to capitalize on sub-optimal sets of compatible rare nets, thereby missing out on the diversity of the sets of compatible rare nets, resulting in poor trigger coverage. 

\noindent\textbf{Solution 3.} To force the agent to explore, we (1) include an entropy term in the loss function of the agent and (2) control the smoothing parameter that affects the variance of the loss calculation. 

To implement (1), we modify the total loss function to $l = l_{\pi} + c_\epsilon \times l_{\epsilon} + c_v \times l_{v}$, where $l$ is the total loss, $l_{\pi}$ is the loss of the policy network, $l_\epsilon$ is the entropy loss, $l_v$ is the value loss, and $c_\epsilon$ and $c_v$ are the coefficients for the entropy and value losses, respectively. We set $c_{\epsilon} = 1$. The entropy loss is inversely proportional to the randomness in the choice of actions. To implement (2), we set the parameter $\lambda$ for policy loss $l_{\pi}$ in PPO to $0.99$. This leads to variance in the loss calculation and hence in the actions chosen by the agent.

Thus, we penalize the agent for having less variance in its choice of actions. Hence, the agent is forced to explore more and is likely to converge to a better state, i.e., a state with more compatible rare nets.
Figure~\ref{fig:challenge_sol3} shows that by modifying the loss function and the smoothing parameter in PPO, the loss does not become $0$ quickly, forcing the agent to explore more.

\subsection{Putting it All Together}
\label{sec:final}

The final architecture of DETERRENT is illustrated in Figure~\ref{fig:RL_pipeline}. In an offline phase, we find the rare nets of the design and generate pairwise compatibility information for them in a parallelized manner. 
Then, for each episode, the agent starts with a random rare net and takes an action according to the policy (a neural network) and the action mask.
The masked action is evaluated to produce a reward for the agent, and the agent moves to the next state. 
This procedure repeats for $T$ steps (i.e., an episode). 
Internally, after a certain number of episodes, the PPO algorithm translates the rewards into losses (depending on the output of the policy network, which generates actions, and the value network, which predicts the expected reward of the action), which are used to update the parameters of the policy and value networks. 
Eventually, when the agent has learned the task, the losses become negligible, and the reward saturates. 
Once the RL agent gives us the maximal sets of compatible rare nets, we pick the $k$ largest distinct sets and generate the test patterns, one for each of those sets, using a SAT solver.
\section{Experimental Evaluation}\label{sec:results}

\subsection{Experimental Setup}
\label{sec:setup}

\begin{table*}[ht]
\centering
\caption{Comparison of trigger coverage (Cov. (\%)) and test length of DETERRENT with random simulations, Synopsys TestMAX~\cite{TestMAX}, TARMAC~\cite{TARMAC_TCAD}, and TGRL~\cite{pan2021automated}. Evaluation is done on 100 random four-width triggered HT-infected netlists.}
\label{tab:my-table}
\resizebox{\textwidth}{!}{
\begin{tabular}{cccccccccccccc}
\toprule
\multirow{3}{*}{Design}& \multirow{1}{*}[-0.1cm]{Number} & \multirow{3}{*}{\# Gates} & \multicolumn{2}{c}{Random} & \multicolumn{2}{c}{TestMAX~\cite{TestMAX}} & \multicolumn{2}{c}{TARMAC~\cite{TARMAC_TCAD}} & \multicolumn{2}{c}{TGRL~\cite{pan2021automated}} & \multicolumn{3}{c}{\textbf{DETERRENT (this work)}}\\
\cmidrule(lr){4-5} 
\cmidrule(lr){6-7} 
\cmidrule(lr){8-9} \cmidrule(lr){10-11} \cmidrule(lr){12-14}    
 & of rare & & Test & Cov. & Test & Cov. & Test & Cov.& Test & Cov. & Test & Patterns Red./ & Cov. \\
& nets & & Length & (\%) & Length & (\%) & Length & (\%) & Length & (\%) & Length & TARMAC \& TGRL  & (\%) 
\\ \midrule
\texttt{c2670} & 43 & 775 & \multicolumn{1}{c}{5306} & 10 & \multicolumn{1}{c}{89} & 27 & \multicolumn{1}{c}{5306} & 100 & \multicolumn{1}{c}{5306} & 96 & \multicolumn{1}{c}{8} & \multicolumn{1}{c}{\textbf{663.25$\bm{\times}$}} & 100 \\
\texttt{c5315} & 165 & 2307 & \multicolumn{1}{c}{8066} & 37 & \multicolumn{1}{c}{103} & 5 & \multicolumn{1}{c}{8066} & 61 & \multicolumn{1}{c}{8066} & 94 & \multicolumn{1}{c}{1585} & \multicolumn{1}{c}{\textbf{5.08$\bm{\times}$}} & 99 \\
\texttt{c6288} & 186 & 2416 & \multicolumn{1}{c}{3205} & 54 & \multicolumn{1}{c}{38} & 4 & \multicolumn{1}{c}{3205} & 100 & \multicolumn{1}{c}{3205} & 85 & \multicolumn{1}{c}{2096} & \multicolumn{1}{c}{\textbf{1.52$\bm{\times}$}} & 99 \\ 
\texttt{c7552} & 282 & 3513 & \multicolumn{1}{c}{9357} & 10 & \multicolumn{1}{c}{137} & 4 & \multicolumn{1}{c}{9357} & 73 & \multicolumn{1}{c}{9357} & 71 & \multicolumn{1}{c}{5910} & \multicolumn{1}{c}{\textbf{1.58$\bm{\times}$}} & 85 \\ 
\texttt{s13207} & 604 & 1801 & \multicolumn{1}{c}{9659} & 3 & \multicolumn{1}{c}{106} & 4 & \multicolumn{1}{c}{9659} & 80 & \multicolumn{1}{c}{9659} & 5 & \multicolumn{1}{c}{9600} & \multicolumn{1}{c}{\textbf{1.01$\bm{\times}$}} & 80 \\ 
\texttt{s15850} & 649 & 2412 & \multicolumn{1}{c}{9512} & 3 & \multicolumn{1}{c}{110} & 3 & \multicolumn{1}{c}{9512} & 79 & \multicolumn{1}{c}{9512} & 8 & \multicolumn{1}{c}{6197} & \multicolumn{1}{c}{\textbf{1.53$\bm{\times}$}} & 81 \\ 
\texttt{s35932} & 1151 & 4736 & \multicolumn{1}{c}{3083} & 99 & \multicolumn{1}{c}{37} & 68 & \multicolumn{1}{c}{3083} & 100 & \multicolumn{1}{c}{3083} & 58 & \multicolumn{1}{c}{6} & \multicolumn{1}{c}{\textbf{513.83$\bm{\times}$}} & 100 \\ 
\texttt{MIPS} & 1005 & 23511 & \multicolumn{1}{c}{25000} & 0 & \multicolumn{1}{c}{796} & 0 & \multicolumn{1}{c}{25000} & 100 & \multicolumn{1}{c}{---} & --- & \multicolumn{1}{c}{1304} & \multicolumn{1}{c}{\textbf{19.17$\bm{\times}$}} & 97 \\ \cmidrule{1-14}
Avg. & 511 & 5184 & \multicolumn{1}{c}{6884} & 27.75${}^\dagger$ & \multicolumn{1}{c}{88.57} & 10${}^\dagger$ & \multicolumn{1}{c}{6884} & 83.5${}^\dagger$ & \multicolumn{1}{c}{6884} & 86.5${}^\dagger$ & \multicolumn{1}{c}{3628.85} & \multicolumn{1}{c}{\textbf{169.68$\bm{\times}$}${}^\ddagger$} & \textbf{95.75}${}^\dagger$ \\ \bottomrule
\end{tabular}
}
\\
[1mm]
${}^\dagger$The coverages are averaged over \texttt{c2670}, \texttt{c5315}, \texttt{c6288}, and \texttt{c7552}. ${}^\ddagger$The reduction is averaged over all except \texttt{MIPS}.
\end{table*}

We implemented our RL agent using \textit{PyTorch1.6} and trained it using a Linux machine with Intel 2.4 GHz CPUs and an NVIDIA Tesla K80 GPU. 
We used the SAT solver provided in the \textit{pycosat} library.
We  implemented 
the parallelized version of TARMAC in \textit{Python 3.6}.
We used \textit{Synopsys VCS} for logic simulations and for evaluating test patterns on HT-infected netlists. 
Similar to prior works (TARMAC and TGRL), for sequential circuits, we assume full scan access. 
To enable a fair comparison, we implemented and evaluated all the techniques on the same benchmarks as TARMAC and TGRL, which were provided to us by the authors of TGRL. 
They also provided us with the TGRL test patterns. 
We also performed experiments on the \texttt{MIPS} processor from OpenCores~\cite{OpenCores_MIPS} to demonstrate scalability. 
For \texttt{MIPS}, we use vectorized environment with 16 parallel processes to speed up the training.
For evaluation, we randomly inserted $100$ HTs in each benchmark and verified them to be valid using a Boolean satisfiability check.

\subsection{Trigger Coverage Performance}

In this section, we compare the trigger coverage provided by different techniques (Table~\ref{tab:my-table}). 
In addition to TARMAC and TGRL, we also compare the performance of DETERRENT with random test patterns and patterns generated from an industry-standard tool, \textit{Synopsys TestMAX}~\cite{TestMAX}. 
We used the number of patterns from TGRL as a reference for the random test patterns and TARMAC to enable a fair comparison.
For TestMAX, the number of patterns is determined by the tool in the default setting (\texttt{run\_atpg}).

Note that for \texttt{s13207}, \texttt{s15850}, and \texttt{s35932}, the netlists corresponding to the test patterns provided by the authors of TGRL were not available to us at the time of writing the manuscript. Hence, we could only evaluate the TGRL test patterns for those circuits on our benchmarks. Due to this, the trigger coverage of TGRL for these benchmarks is low. 
Additionally, TGRL does not evaluate on the \texttt{MIPS} benchmark. Hence the corresponding cells in the table are empty.
To enable a fair comparison, we have not included \texttt{s13207}, \texttt{s15850}, and \texttt{s35932} in the average test length, as well as \texttt{MIPS} in the average trigger coverages for all techniques in Table~\ref{tab:my-table}.

The results demonstrate that DETERRENT achieves better trigger coverage than all other techniques while reducing the number of test patterns. On average, DETERRENT improves the coverage over random patterns ($68\%$), TestMAX ($85.75\%$), TARMAC ($12.25\%$), and TGRL ($9.25\%$), and achieves two orders of magnitude reduction in the number of test patterns over TARMAC and TGRL ($169\times$).

\subsection{Impact of Trigger Width}
Trigger width, i.e., the number of rare nets that constitute the trigger, directly affects the stealth of the HT.
As the trigger width increases, the difficulty to activate the trigger increases exponentially.
For example, for a rareness threshold of $0.1$, if the trigger width is $4$, the probability of activating the trigger through random simulation is $10^{-4}$. 
Whereas, if the trigger width is $12$, the probability reduces to $10^{-12}$. 
Thus, it is necessary to maintain the performance with increasing trigger width.
Figure~\ref{fig:my_label1} illustrates the results for \texttt{c6288}; we chose this benchmark as TGRL provides a good trigger coverage.
With increasing trigger width, the performance of TGRL drops drastically. 
DETERRENT maintains a steady trigger coverage, demonstrating that it can activate extremely rare trigger conditions.

\begin{figure}[tb]
\centering
\includegraphics[width=0.475\textwidth,trim={0.3cm 0.3cm 0cm 0.2cm},clip]{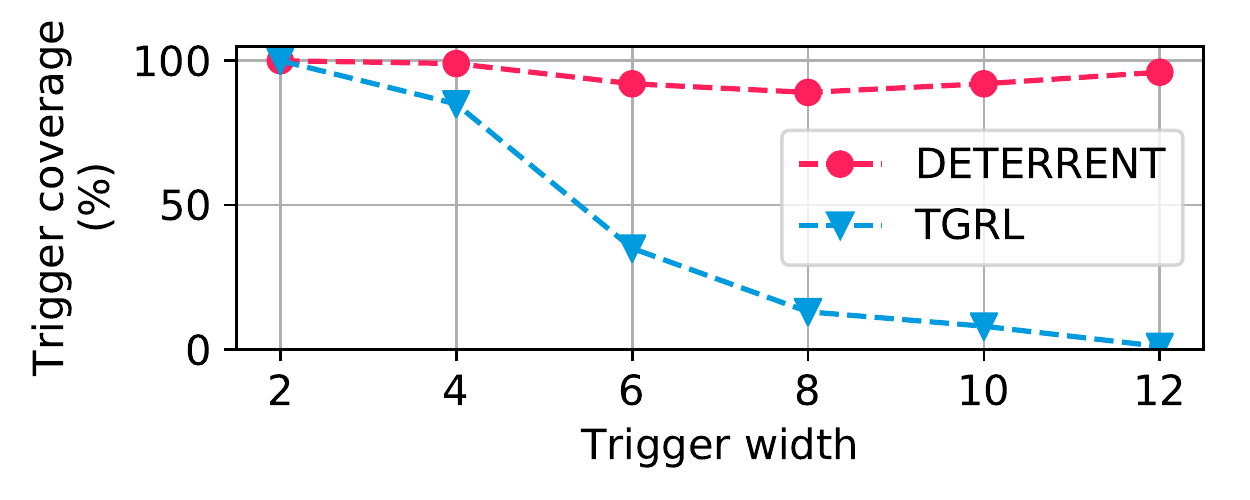}
\caption{Impact of trigger width on the trigger coverage of TGRL~\cite{pan2021automated} and DETERRENT for \texttt{c6288}.}
\label{fig:my_label1}
\end{figure}

\subsection{Trigger Coverage vs. Number of Patterns}
We now investigate the marginal impact of test patterns on trigger coverage.
To do so, we analyze the increase in trigger coverage provided by each test pattern for DETERRENT and TGRL.
Figure~\ref{fig:my_label4} demonstrates that DETERRENT obtains the maximum trigger coverage with very few patterns as opposed to TGRL.

\begin{figure}[tb]
\centering
\includegraphics[width=0.475\textwidth,trim={0 0.4cm 0 0.1cm},clip]{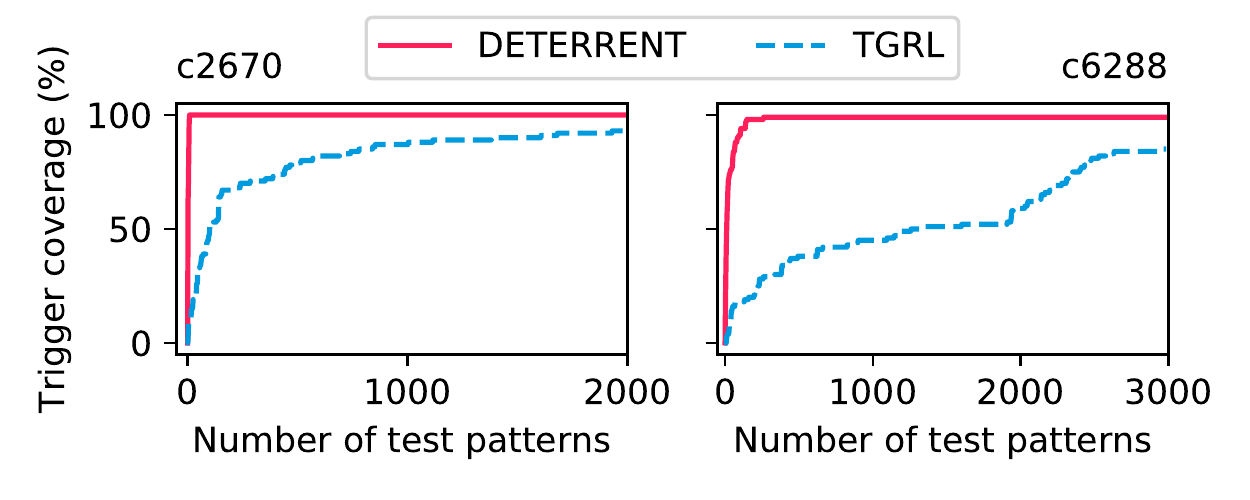}
\caption{Trigger coverage vs. test patterns comparison.}
\label{fig:my_label4}
\end{figure}

\subsection{Impact of Rareness Threshold}
Rareness threshold is the probability below which nets are classified as rare, i.e., the logic values of these nets are strongly biased towards 0 or 1. 
For a given trigger width ($\alpha$), as the rareness threshold increases, the number of rare nets increases (say by a factor of $\beta$), and so, the number of combinations possible for constructing the trigger increases by a factor of $\beta^{\alpha}$, making it much more difficult to activate.
Figure~\ref{fig:my_label5} shows that the number of rare nets increases with increasing threshold (leading to up to $64\times$ more potential trigger combinations), but DETERRENT is still able to achieve similar trigger coverage ($\leq 2\%$ drop) with less than $2500$ patterns.\footnote{The authors of TGRL did not provide us the test patterns for thresholds other than $0.1$. Hence, we do not compare with TGRL for other threshold values.} 

In another experiment, we trained the agent using rare nets for a threshold of $0.14$ and evaluated the generated test patterns on rare nets with threshold of $0.1$---the trigger coverage is $99\%$. This hints that we can train the agent for a large set of rare nets and use it to generate patterns for a subset of rare nets.

\begin{figure}[tb]
\centering
\includegraphics[width=0.475\textwidth,trim={0 0.4cm 0 0.3cm}, clip]{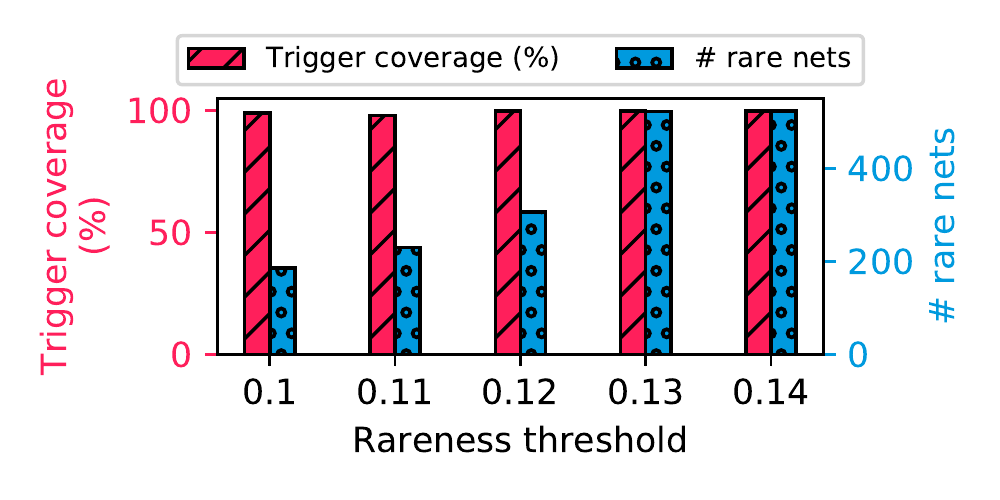}
\caption{Impact of rareness threshold on the number of rare nets and the trigger coverage of DETERRENT for \texttt{c6288}.}
\label{fig:my_label5}
\end{figure}
\section{Discussion and Future Work}
\label{sec:discussion}

\noindent\textbf{Comparison with TGRL~\cite{pan2021automated}.} Our RL agent architecture is entirely different from TGRL. 
TGRL maximizes a heuristic based on the rareness and testability of nets. 
In contrast, we identify the problem of trigger activation to be a set-cover problem and find maximal sets of compatible rare nets. 
Moreover, TGRL states and actions are test patterns generated by flipping bits probabilistically, whereas our agent's efforts are more directed by generating maximal sets of compatible rare nets.
Due to our formulation, we achieve better coverage but with orders of magnitude fewer test patterns than TGRL (see Section~\ref{sec:results}).

\noindent \textbf{Feasibility of using a SAT solver.} We use a SAT solver for the compatibility check during training and for generating test patterns from the maximal sets of compatible rare nets provided by the RL agent.
Nevertheless, our technique is scalable for larger designs (as evidenced by our results) because: 
(i)~During training, we reduce the runtime of using the SAT solver as we generate a dictionary containing the compatibility information offline in a parallelized manner.
(ii)~When generating the test patterns, we only require invoking the SAT solver $T$ times, where $T$ is the required number of test patterns.
Hence, even for large benchmarks like \texttt{MIPS}, we can generate test patterns that outperform all the HT detection techniques in less than $12$ hours.

\noindent\textbf{Meta-learning.} We generated test patterns for individual benchmarks using separate agents. 
Since the training time of our agents for all benchmarks is less than 12 hours, it is practical to use our technique. 
As part of future work, we would like to explore the principles of designing a standalone agent that can be trained on a corpus of benchmarks once and be used to generate test patterns for unseen benchmarks.
To that end, we plan to extend the current framework by using principles from meta-learning.
\section{Conclusion}
\label{sec:Conclusion}

Prior works on trigger activation for HT detection have shown reasonable trigger coverage, but they are ineffective, not scalable, or require a large number of test patterns. 
To address these limitations, we develop an RL agent to guide the search for optimal test patterns.
However, in order to design the agent, we face several challenges like inefficiency and lack of scalability. 
We overcome these challenges using different features like masking and boosting exploration of the agent. 
As a result, the final architecture generates a compact set of test patterns for designs of all sizes, including the \texttt{MIPS} processor.
Experimental results demonstrate that our agent reduces the number of test patterns by $169\times$ on average while improving trigger coverage. 
Further evaluations show that our agent is robust against increasing complexity.
Our agent maintains steady trigger coverage for different trigger widths, whereas the state-of-the-art technique's performance drops drastically. 
Our agent also maintains performance against the increasing number of possible trigger combinations. 
Although this work demonstrates the power of RL for trigger activation, the challenges related to scalability and efficiency are not specific to the current problem.
The ways in which we overcame the challenges can be used to develop better defenses for other hardware security problems.

\section*{Acknowledgments}
The work was partially supported by the National Science Foundation (NSF CNS--1822848 and NSF DGE--2039610). 
Portions of this work were conducted with the advanced computing resources provided by Texas A\&M High Performance Research Computing.

\bibliographystyle{unsrt}
\bibliography{main}

\begin{thebibliography}{10}

\bibitem{RL_intrusion_detection}
Guillermo~Caminero et~al.
\newblock Adversarial environment reinforcement learning algorithm for
  intrusion detection.
\newblock {\em Computer Networks}, 159:96--109, 2019.

\bibitem{RL_Fuzzing}
Konstantin~B{\"o}ttinger et~al.
\newblock Deep reinforcement fuzzing.
\newblock In {\em 2018 IEEE S\&P Workshops}, pages 116--122. IEEE, 2018.

\bibitem{RL_Fuzzing_USENIX}
Daimeng~Wang et~al.
\newblock Syzvegas: Beating kernel fuzzing odds with reinforcement learning.
\newblock In {\em USENIX}, pages 2741--2758, 2021.

\bibitem{RL_CPS1}
Hoang-Dung~Tran et~al.
\newblock Safety verification of cyber-physical systems with reinforcement
  learning control.
\newblock {\em ACM TECS}, 18(5s):1--22, 2019.

\bibitem{RL_CPS2}
Aidin~Ferdowsi et~al.
\newblock Robust deep reinforcement learning for security and safety in
  autonomous vehicle systems.
\newblock In {\em 2018 ITSC}, pages 307--312. IEEE, 2018.

\bibitem{nguyen2019deep_new}
Thanh~Thi Nguyen and Vijay~Janapa Reddi.
\newblock Deep reinforcement learning for cyber security.
\newblock {\em IEEE TNNLS}, pages 1--17, 2021.

\bibitem{xiao2016Trojan_survey}
Kan~Xiao et~al.
\newblock Hardware trojans: Lessons learned after one decade of research.
\newblock {\em ACM TODAES}, 22(1):1--23, 2016.

\bibitem{TestMAX}
{Synopsys}.
\newblock {TestMAX ATPG and TestMAX Diagnosis User Guide}.
\newblock Version S-2021.06-SP1, 2021.

\bibitem{chakraborty2009mero}
Rajat~Chakraborty et~al.
\newblock {MERO: A statistical approach for hardware Trojan detection}.
\newblock In {\em CHES}, pages 396--410. Springer, 2009.

\bibitem{TARMAC_TCAD}
Yangdi Lyu and Prabhat Mishra.
\newblock {Scalable Activation of Rare Triggers in Hardware Trojans by Repeated
  Maximal Clique Sampling}.
\newblock {\em IEEE TCAD}, 40(7):1287--1300, 2021.

\bibitem{pan2021automated}
Zhixin Pan and Prabhat Mishra.
\newblock Automated test generation for hardware trojan detection using
  reinforcement learning.
\newblock In {\em Proceedings of the 26th Asia and South Pacific Design
  Automation Conference}, pages 408--413, 2021.

\bibitem{narasimhan2012hardware}
Seetharam~Narasimhan et~al.
\newblock {Hardware Trojan detection by multiple-parameter side-channel
  analysis}.
\newblock {\em IEEE Transactions on computers}, 62(11):2183--2195, 2012.

\bibitem{huang2016mers}
Yuanwen~Huang et~al.
\newblock {MERS: statistical test generation for side-channel analysis based
  Trojan detection}.
\newblock In {\em Proceedings of the 2016 ACM CCS}, pages 130--141, 2016.

\bibitem{huang2018scalable}
Yuanwen~Huang et~al.
\newblock {Scalable test generation for Trojan detection using side channel
  analysis}.
\newblock {\em IEEE TIFS}, 13(11):2746--2760, 2018.

\bibitem{lyu2019efficient}
Yangdi Lyu and Prabhat Mishra.
\newblock {Efficient test generation for Trojan detection using side channel
  analysis}.
\newblock In {\em DATE Conference \& Exhibition}, pages 408--413. IEEE, 2019.

\bibitem{rai2009performance}
Devendra Rai and John Lach.
\newblock {Performance of delay-based Trojan detection techniques under
  parameter variations}.
\newblock In {\em 2009 IEEE HOST}, pages 58--65. IEEE, 2009.

\bibitem{DETERRENT-git}
Vasudev Gohil.
\newblock {DETERRENT}.
\newblock \url{https://github.com/gohil-vasudev/DETERRENT}, 2022.

\bibitem{cormen2009introduction}
Thomas H~Cormen et~al.
\newblock {\em Introduction to algorithms}.
\newblock MIT press, 2009.

\bibitem{schulman2017proximal}
John~Schulman et~al.
\newblock Proximal policy optimization algorithms.
\newblock {\em arXiv preprint arXiv:1707.06347}, 2017.

\bibitem{OpenCores_MIPS}
{OpenCores}.
\newblock {Educational 16-bit MIPS Processor}.
\newblock \url{https://opencores.org/projects/mips_16}, 2013.
\newblock {Last accessed the website on 11/17/2021}.

\end{thebibliography}

\end{document}